\newcolumntype{Y}{>{\centering\arraybackslash}X}
\newcommand{\comment}[1]{}
\DeclareMathOperator*{\argmin}{argmin}
\newtheorem{prop}{Proposition}
\newtheorem{thm}{Theorem}
\newtheorem{lem}{Lemma}
\newtheorem{problem}{Problem}
\def\*#1{\mathbf{#1}}
\def\^#1{\mathcal{#1}}
\journal{Neurocomputing}
\begin{document}

\begin{frontmatter}
\title{Matrix Recovery with Implicitly Low-Rank Data}

\author[a]{Xingyu Xie}
\ead{nuaaxing@gmail.com}
\author[b]{Jianlong Wu}
\ead{jlwu1992@pku.edu.cn}
\author[c]{Guangcan Liu}
\ead{gcliu@nuist.edu.cn}
\author[a]{Jun Wang\corref{cor}}
\ead{junwang@outlook.com}

\cortext[cor]{Corresponding author.}

\address[a]{College of Mechanical and Electrical Engineering, Nanjing University of Aeronautics and Astronautics, Nanjing 210016, P. R. China}
\address[b]{Key Laboratory of Machine Perception (MOE), School of EECS, Peking University, Beijing 100871, P. R. China}
\address[c]{B-DAT Laboratory, Department of Information and	Control, \\ Nanjing University of Information Science and Technology, Nanjing	210014, P. R. China}

\begin{abstract}
In this paper, we study the problem of matrix recovery, which aims to restore a target matrix of authentic samples from grossly corrupted observations. Most of the existing methods, such as the well-known Robust Principal Component Analysis (RPCA), assume that the target matrix we wish to recover is low-rank. However, the underlying data structure is often non-linear in practice, therefore the low-rankness assumption could be violated. To tackle this issue, we propose a novel method for matrix recovery in this paper, which could well handle the case where the target matrix is low-rank in an implicit feature space but high-rank or even full-rank in its original form. Namely, our method pursues the low-rank structure of the target matrix in an implicit feature space. By making use of the specifics of an accelerated proximal gradient based optimization algorithm, the proposed method could recover the target matrix with non-linear structures from its corrupted version. Comprehensive experiments on both synthetic and real datasets demonstrate the superiority of our method.
\end{abstract}

\begin{keyword}
Matrix recovery, RPCA, low-rank, nonlinear structure
\end{keyword}

\end{frontmatter}

\section{Introduction}

Due to the unconstrained nature of today's data acquisition procedure, the observed data is often contaminated by gross errors, such as large corruptions and outliers. The gross errors, in general, could significantly reduce the representativeness of data samples and therefore seriously distort the analysis of data. Given this pressing situation, it is of considerable practical significance to study the problem of \emph{Matrix Recovery}, which aims to correct the errors possibly existing in a data matrix of observations.
\begin{problem}\label{NLRPCA}
	(Matrix Recovery). Let $\mathbf{X}\in\mathbb{R}^{d\times n}$ be an observed data matrix which could be decomposed as
	\begin{align*}
		\mathbf{X} = \mathbf{A} + \mathbf{E},
	\end{align*}
	where $\mathbf{A}$ is the target matrix of interest in which each column is a $d$-dimensional authentic sample, and $\mathbf{E}$ corresponds to the possible errors. Given $\mathbf{X}$, the goal is to recover $\mathbf{A}$.
\end{problem}
In general, the above problem is ill-posed, and thus some restrictions are necessary to be imposed for both $\mathbf{A}$ and $\mathbf{E}$.
Some methods have already been proposed to solve the above problem with proper constraints.
For example, provided that $\mathbf{A}$ is low-rank and $\mathbf{E}$ is sparse, Problem~\ref{NLRPCA} could be well solved by a convex procedure termed Principal Component Pursuit (PCP), which is also known as Robust Principal Component Anlaysis (RPCA)~\cite{candes2011robust,zhang2015exact}. Outlier Pursuit (OP)~\cite{xu2010robust} solves Problem~\ref{NLRPCA} under the conditions that $\mathbf{A}$ is low-rank and $\mathbf{E}$ is column-wisely sparse. Under similar conditions, Low-Rank Representation (LRR)~\cite{tpami_2013_lrr,liu:tpami:2016} guarantees to recover the row space of $\mathbf{A}$. In addition, LRR equipped with proper dictionaries could handle the cases where $\mathbf{A}$ is of high coherence~\cite{liu2017blessing,liu:tsp:2016}.
Even though these related approaches are very powerful, they all rely on the assumption that $\mathbf{A}$ is low-rank, which, however, could be violated in practice.
\par
To cope with the data of complex structures, it would be more suitable to consider the cases where $\mathbf{A}$ is low-rank after feature mapping: namely, $\mathbf{A}$ is implicitly low-rank in some (unknown) feature space but could be high-rank or even full-rank by itself. There are only a few investigations in this direction, such as the Kernel Principal Component Analysis (KPCA)~\cite{nguyen2009robust}. In general, KPCA could apply to the data matrix that is implicitly low-rank but originally high-rank. However, this method assumes that the data is contaminated by small Gaussian noise and is therefore brittle in the presence of gross errors. We also notice that many kernel methods have been established in community of low-rankness modeling, e.g., ~\cite{xiao2016robust,ji2017low,xie2018implicit,nguyen2015kernel}. Nevertheless, these methods are designed for a specific purpose of classification or clustering, and thus they cannot be directly applied to Problem \ref{NLRPCA} which is essentially a data recovery problem.
\par
In this work, we would like to study Problem~\ref{NLRPCA} in the context of $\mathbf{A}$ is implicitly low-rank and $\mathbf{E}$ contains gross errors. Following~\cite{xu2010robust,tpami_2013_lrr}, we will focus on the case where $\mathbf{E}$ is column-wisely sparse, i.e., the observed data matrix $\mathbf{X}$ is contaminated by outliers. The basic idea of our method to pursue the low-rank structure of $\mathbf{A}$ in an implicit feature space of higher but unknown (maybe infinite) dimension is simple and traditional. Nevertheless, it is indeed rather challenging to realize this idea:
\begin{itemize}
	\item Firstly, the rank of an unknown-dimensional matrix cannot be calculated directly. To overcome this difficulty, we show that the nuclear norm of $\mathbf{A}$ after feature mapping is actually equivalent to the nuclear norm of the square root of the Gram matrix (or kernel matrix as equal). This enables the possibility of obtaining a computable formulation for solving the low-rank constraint in an unknown-dimensional implicit space.
	\item Secondly, in the presence of outliers, it is actually inaccurate to estimate the Gram matrix based on $\mathbf{X}$. This is because the outliers could seriously reduce the quality of the estimated Gram matrix, in which the whole column and row corresponding to the outliers are corrupted. Hence, we build our algorithm upon a kernel function that only defines the inner product of the points in feature space. Since the kernel function is independent of the data $\mathbf{X}$, this strategy is conducive to reduce the influence of outliers and preserve the geometry structure of the clean data.
	\item Finally, the combination of implicit feature mapping with kernel low-rankness pursuit generally leads to a challenging optimization problem, which is nonconvex and nonsmooth. To overcome this difficulty, we adopt the Accelerated Proximal Gradient (APG) method established by~\cite{li2015accelerated}, together with some linearization operators, to solved the raised optimization problem. In particular, we provide some theoretical analyses for the convergence of our optimization algorithm. Namely, the solution produced by the proposed algorithm is analytically proved to be a stationary point.
\end{itemize}

We conduct experiments on both synthetic and real datasets, and we also compare with some state-of-the-art methods. The results show that, in terms of recovery accuracy, our method is distinctly better than all competing methods.

\section{Related Work}
\subsection{Linear low-rank recovery}
Recently, linear low-rank recovery has attracted great attention due to its pleasing efficacy in exploring the low-dimensional structures from given measurements. Formally, the linear low-rank recovery problem can be directly or indirectly written in the following form:
\begin{equation}\label{eq:rpca_l}
	\begin{split}
		\mathop{\min}\limits_{\mathbf{A}} ~& {\|\mathbf{A}\|_{*}+ \lambda \|\mathbf{A}-\mathbf{X}\|_{\ell}},
	\end{split}
\end{equation}
where $\mathbf{X}$ and $\mathbf{A} \in \mathbb{R}^{d\times n}$ represent the given data and the desired structure, respectively. $\mathbf{A}-\mathbf{X}$ is the error residue. $\|\cdot\|_{\ell}$ is a certain robust norm to measure the residual between the observed and recovered signals. $\|\mathbf{\cdot}\|_{*}$ denotes a low-rank structure regularization and $\lambda$ is a non-negative parameter that provides a trade-off between the recovery fidelity and the low-rank promoting regularizer. The major difference among existing recovery methods is pertaining to the choice of penalty on the residual. Candès et al.~\cite{candes2011robust} choose $\ell_1$ norm to model the sparse noise. They theoretically prove that their model can \emph{exactly} recover the ground-truth data with the assumption of sparse outliers/noise. The works in~\cite{xu2010robust,zhang2015exact} select $\ell_{21}$ norm to penalize the column-sparse residual. Their model can also recover the correct column space of data.
The linear low-rank recovery has been applied to many computer vision tasks, such as face recognition~\cite{zheng2014fisher} and image classification~\cite{zhang2015image}, where they perform very well.
Besides, for low-rank matrix recovery, Liu et al.~\cite{liu2013fast} propose a fast tri-factorization method, and Cui et al.~\cite{cui2018exact} come up with a transformed affine matrix rank minimization method.
\subsection{Kernel low-rank method}
KPCA, an widespread extension of traditional PCA, seeks a low-rank approximation of the affinity among the data points in the kernel space~\cite{scholkopf1998nonlinear}. Similar to PCA, it is sensitive to the outliers even after mapping. Hence, some robust kernel low-rank methods have been proposed and investigated. In particular, the works in~\cite{baghshah2011learning,ji2017low,xie2018implicit} provide kernel low-rank methods for subspace clustering, which demonstrate that the kernel low-rank approximation does benefit the clustering of non-linear data.
Nguyen et al.~\cite{nguyen2015kernel} apply the kernel low-rank representation to face recognition.
Works in~\cite{pan2011learning,rakotomamonjy2014} investigate the influence of different kernels.
Garg et al.~\cite{garg2016non} present a new way to pursue the low-rankness in the kernel space, but the measurement of the other regularization is still in the original space, which cannot be directly utilized to solve Problem \ref{NLRPCA}.
\par
Though the existing methods have achieved great success for the clustering or linear low-rank recovery tasks, none of them can \emph{robustly} recover the non-linear or super low dimensional data in the original space. Comparatively, our model solves Problem \ref{NLRPCA} robustly when $\mathbf{A}$ is implicitly low-rank but could be high-rank or even full-rank by itself.

\section{Kernel Low-Rank Recovery}
\subsection{Problem Formulation}
The model, for solving the linear low-rank recovery problem with column-wise noise, can be represented as:
\begin{equation}\label{eq:rpca}
	\begin{split}
		\mathop{\min}\limits_{\mathbf{A}} ~& {\|\mathbf{A}\|_{*}+ \lambda \|\mathbf{A}-\mathbf{X}\|_{2,1}},
	\end{split}
\end{equation}
where $\|\cdot\|_*$ is the nuclear norm (sum of all singular values) and the $\ell_{21}$-norm can be calculated as $\|\mathbf{C}\|_{2,1} = \sum_i\|\mathbf{C}_{:,i}\|_2$. To tackle the issue of implicitly low-rank data, it is worthwhile to kernelize the model in~(\ref{eq:rpca}) to handle the data which are sampled from some complex nonlinear manifold. Moreover, in the scenario that the ambient dimension $d$ is far greater than the data size $n$, kernel method is more efficient.

Let $\phi: \mathbb{R}^d \rightarrow H$ be a mapping from the input space to the reproducing kernel Hilbert space $H$. Here we assume that ${\Phi(\mathbf{a}_i)}^n_{i=1}$ resides in a certain linear subspace in $H$. Namely, the non-linear observation is considered to be linearly dependent in $H$. Let $\mathbf{K} \in \mathbb{R}^{n\times n}$ be a positive semidefinite kernel Gram matrix whose elements are computed as:
\begin{equation}
	\begin{split}
		K_{ij} = \left(\phi(\mathbf{A})^T\phi(\mathbf{A})\right)_{ij} =  \phi(\textbf{a}_i)^T\phi(\textbf{a}_j) = ker(\textbf{a}_i,\textbf{a}_j),\nonumber
	\end{split}
\end{equation}
where $ker: \mathbb{R}^d \times \mathbb{R}^d \rightarrow R$ is the kernel function and
\begin{equation}
	\begin{split}
		\phi(\mathbf{A}) = [\phi(\textbf{a}_1),\phi(\textbf{a}_2),\cdots,\phi(\textbf{a}_n)].\nonumber
	\end{split}
\end{equation}
With the above assumption, by kernelizing model (\ref{eq:rpca}), our model can be represented as:
\begin{equation}\label{eq:generalform}
	\begin{split}
		\mathop{\min}\limits_{\mathbf{A}} ~& {\|\phi(\mathbf{A})\|_{*}+ \lambda \|\phi(\mathbf{A})-\phi(\mathbf{X})\|_{2,1}}.
	\end{split}
\end{equation}
Note that, after mapping, the data matrix still contains column-wise noise or outliers. Hence, we also adopt the $\ell_{21}$-norm in (\ref{eq:generalform}) to measure the error residue in the kernel space.

\subsection{Reformulation and Relaxation}
It is hard to optimize (\ref{eq:generalform}) due to the explicit dependency on $\phi(\mathbf{A})$. Fortunately, as shown in~\cite{garg2016non}, a symmetric and positive semi-definite matrix $\mathbf{K}$ can be factorized. We can easily derive the following proposition.

\begin{prop}
Assume $\mathbf{K}$ is a kernel Gram matrix which is computed as $\mathbf{K} =\phi(\mathbf{A})^T\phi(\mathbf{A})$, then we have
	\begin{equation}\label{prop1}
		\begin{split}
			\|\mathbf{B}\|_{*} = \|\phi(\mathbf{A})\|_{*}, ~~\forall ~~\mathbf{B}~:~\mathbf{K} = \mathbf{B}^T\mathbf{B},
		\end{split}
	\end{equation}
	where $\mathbf{B} \in \mathbb{R}^{n\times n}$.
\end{prop}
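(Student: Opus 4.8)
The plan is to reduce the nuclear norm of any matrix to a quantity that depends only on its Gram matrix $\mathbf{M}^{T}\mathbf{M}$, and then to observe that both $\phi(\mathbf{A})$ and every admissible $\mathbf{B}$ share the same Gram matrix $\mathbf{K}$.

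First I would recall the spectral characterization of the nuclear norm: for a finite-rank matrix or operator $\mathbf{M}$, the singular values $\sigma_{i}(\mathbf{M})$ are exactly the square roots of the eigenvalues of the positive semidefinite matrix $\mathbf{M}^{T}\mathbf{M}$. Consequently the nuclear norm admits the Gram-matrix form
\[
\|\mathbf{M}\|_{*} = \sum_{i}\sigma_{i}(\mathbf{M}) = \sum_{i}\sqrt{\lambda_{i}(\mathbf{M}^{T}\mathbf{M})} = \operatorname{tr}\big((\mathbf{M}^{T}\mathbf{M})^{1/2}\big),
\]
where $(\cdot)^{1/2}$ denotes the unique positive semidefinite square root and $\lambda_{i}(\cdot)$ the eigenvalues. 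The crucial point is that the right-hand side is a function of $\mathbf{M}^{T}\mathbf{M}$ alone, so any two matrices with the same Gram matrix automatically have identical nuclear norms.

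Next I would apply this identity to the two matrices at hand. By hypothesis $\phi(\mathbf{A})^{T}\phi(\mathbf{A}) = \mathbf{K}$ and $\mathbf{B}^{T}\mathbf{B} = \mathbf{K}$, so
\[
\|\phi(\mathbf{A})\|_{*} = \operatorname{tr}(\mathbf{K}^{1/2}) = \|\mathbf{B}\|_{*},
\]
which holds for every $\mathbf{B}$ meeting the constraint and thereby establishes the claim. Note that this also explains why the conclusion is insensitive to the non-uniqueness of $\mathbf{B}$ (e.g.\ replacing $\mathbf{B}$ by $\mathbf{Q}\mathbf{B}$ for orthogonal $\mathbf{Q}$): the nuclear norm never sees $\mathbf{B}$ itself, only $\mathbf{B}^{T}\mathbf{B}$.

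The one point deserving care, which I regard as the main (though mild) obstacle, is that $\phi(\mathbf{A})$ need not be a finite-dimensional matrix: its columns live in the possibly infinite-dimensional Hilbert space $H$, so $\phi(\mathbf{A})$ is really a linear operator from $\mathbb{R}^{n}$ into $H$. I would resolve this by noting that $\phi(\mathbf{A})$ has rank at most $n$ and is therefore a finite-rank, hence trace-class, operator; its nonzero singular values coincide with those obtained from the $n\times n$ matrix $\phi(\mathbf{A})^{T}\phi(\mathbf{A}) = \mathbf{K}$ through the standard polar/SVD decomposition, while the zero singular values contribute nothing to the nuclear norm. Thus the Gram-matrix formula remains valid in this setting, and the finite dimensionality of $\mathbf{K}$ is all that is actually needed.
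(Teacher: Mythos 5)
Your proof is correct and is exactly the argument the paper has in mind: the paper states this proposition without an explicit proof (deferring to the factorization fact from Garg et al.), and the intended justification is precisely that the nuclear norm depends only on the Gram matrix via $\|\mathbf{M}\|_{*}=\operatorname{tr}\bigl((\mathbf{M}^{T}\mathbf{M})^{1/2}\bigr)$, so both $\phi(\mathbf{A})$ and every $\mathbf{B}$ with $\mathbf{B}^{T}\mathbf{B}=\mathbf{K}$ have nuclear norm $\operatorname{tr}(\mathbf{K}^{1/2})$. Your closing remark handling the infinite-dimensional case, viewing $\phi(\mathbf{A})$ as a finite-rank operator from $\mathbb{R}^{n}$ into $H$ whose nonzero singular values are determined by the $n\times n$ matrix $\mathbf{K}$, correctly settles a subtlety the paper silently glosses over.
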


Substituting (\ref{prop1}) into (\ref{eq:generalform}), we convert (\ref{eq:generalform}) into:
\begin{equation}\label{Auxiliary}
	\begin{split}
		\mathop{\min}\limits_{\mathbf{A},\mathbf{B}} ~& {\|\mathbf{B}\|_{*}+ \lambda \|\phi(\mathbf{A})-\phi(\mathbf{X})\|_{2,1}},
		\\ \textbf{s.t.}  ~~&\mathbf{B}^T\mathbf{B} = \phi(\mathbf{A})^T\phi(\mathbf{A}).
	\end{split}
\end{equation}
We then relax the constrained problem to the following unconstrained one:
\begin{equation}\label{relax}
	\begin{split}
		\mathop{\min}\limits_{\mathbf{A},\mathbf{B}} ~&~ \|\mathbf{B}\|_{*}+ \lambda \|\phi(\mathbf{A})-\phi(\mathbf{X})\|_{2,1}+ \frac{\rho}{2}\|\mathbf{B}^T\mathbf{B} - \phi(\mathbf{A})^T\phi(\mathbf{A})\|_F^2,
	\end{split}
\end{equation}
where $\rho > 0$ is a parameter which balances the difference
$\mathbf{B}^T\mathbf{B} - \phi(\mathbf{A})^T\phi(\mathbf{A})$ and the original objective function. We can see that when $\rho$ is sufficiently large, (\ref{relax}) and (\ref{Auxiliary}) are the same model. It is worth mentioning that (\ref{Auxiliary}) can be solved by adopting the alternative direction method of multipliers (ADMM) technique. However, the optimization of the subproblem related to $\mathbf{A}$ is nonconvex and an auxiliary variable will be introduced. ADMM fails to ensure the convergence when the optimization involves more than three variables. Therefore, we choose an APG based method for our nonconvex and nonsmooth problem whose convergence can be guaranteed~\cite{li2015accelerated}. Another advantages of the relaxation is that sometimes the rank of ground-truth matrix $\phi(\mathbf{A})$ is higher than that of the solution of (\ref{Auxiliary}), which is caused by some unsuitable $\phi(\cdot)$. the solution of (\ref{relax}) is closer to the ground-truth in this case, and thus (\ref{relax}) is more robust to the selection of mapping functions.

\subsection{Optimization Algorithm}
We will show how to solve (\ref{relax}) in this subsection. We minimize the objective function alternately over $\mathbf{B}$ and $\mathbf{A}$. The updating of $\mathbf{A}$ is performed by the Monotone APG together with some linear approximation. Meanwhile, the subproblem involving $\mathbf{B}$ has a closed-form solution.\\
\textbf{(1) Update $\mathbf{B}$}

$\mathbf{B}$ can be updated by solving the following subproblem:
\begin{equation}\label{sub2}
	\begin{split}
		\mathop{\min}\limits_{\mathbf{B}}  {\|\mathbf{B}\|_{*}+ \frac{\rho}{2}\|\mathbf{B}^T\mathbf{B} - \mathbf{K_A}\|_F^2},
	\end{split}
\end{equation}
where $\mathbf{K_A} = \phi(\mathbf{A})^T\phi(\mathbf{A})$. Denote the singular value
decomposition~(SVD) of $\mathbf{K_A}$ as  $\mathbf{K_A} = \mathbf{U\Sigma V}^T$, and this subproblem has a closed-form solution given by \cite{garg2016non}:
\begin{equation}\label{sub2solution}
	\begin{split}
		\mathbf{B}^* =  \mathbf{\Gamma}^*\mathbf{V}^T.
	\end{split}
\end{equation}
$\mathbf{\Gamma}^*$ is a diagonal matrix with $\Gamma_{ii} = \argmin_{\gamma>0} \frac{\rho}{2}(\sigma_i - \gamma^2)^2 + \gamma$, where $\sigma_i$ is
the $i$-th singular value of $\mathbf{K_A}$. Hence, each $\Gamma_{ii}$ can be achieved by solving a cubic equation. Note that $\mathbf{B}^*$ is not unique since one can multiply an arbitrary unitary matrix to the left of (\ref{sub2solution}) without changing the objective value in (\ref{sub2}). Fortunately, the  non-uniqueness does not affect the optimization of $\mathbf{A}$ since only $(\mathbf{B}^{*})^T\mathbf{B}^*$ involves the updating of $\mathbf{A}$.\\
\textbf{(2) Update $\mathbf{A}$}

To update $\mathbf{A}$, the following subproblem should be solved:
\begin{equation}\label{sub1}
	\begin{split}
		\mathop{\min}\limits_{\mathbf{A}} { \|\phi(\mathbf{A})-\phi(\mathbf{X})\|_{2,1}+ \frac{\alpha}{2}\|\mathbf{B}^T\mathbf{B} - \phi(\mathbf{A})^T\phi(\mathbf{A})\|_F^2},
	\end{split}
\end{equation}
where $\alpha = \rho/\lambda$.  By dividing the matrix $\*A$ into columns, (\ref{sub1}) can be rewritten as:
\begin{equation}\label{sub1decomp}
	\begin{split}
		\mathop{\min}\limits_{\mathbf{a}_1,\cdots,\mathbf{a}_n} \sum_{i=1}^{n}\left\{\|\phi(\mathbf{a}_i)-\phi(\mathbf{x}_i)\|_{2}+ \frac{\alpha}{2}\|\mathbf{m}_i - \phi(\mathbf{A})^T\phi(\mathbf{a}_i)\|_2^2\right\},\nonumber
	\end{split}
\end{equation}
where $\mathbf{m}_i$ is the $i$-th column of $\mathbf{B}^T\mathbf{B} $.  The solution of this problem can be achieved by the block coordinate descent
(BCD) method \cite{xu2013block} which minimizes the objective cyclically over each of $\mathbf{a}_1,\cdots, \mathbf{a}_n$ while fixing the remaining blocks at their last updated values. Hence, we are required to address the following problem:
\begin{equation}\label{sub1decomp2}
	\begin{split}
		\mathop{\min}\limits_{\mathbf{a}_i} ~& \sqrt{\phi(\*{a}_i)^T\phi(\*{a}_i) + \phi(\*{x}_i)^T\phi(\*{x}_i) - 2\phi(\*{x}_i)^T\phi(\*{a}_i)}+ \frac{\alpha}{2}\sum_{j=1}^{n}\left(m_{ij} - \phi(\*{a}_i)^T\phi(\*{a}_j)\right)^2.
	\end{split}
\end{equation}
To optimize this problem, it requires to define the kernel function $ker: \mathbb{R}^d \times \mathbb{R}^d \rightarrow R$. Here we choose two types of kernels (convex and non-convex) as the examples. The optimization related to other kernel functions can be solved in a similar way.

\textbf{(i) Convex kernel:} We select the most commonly used convex kernel, i.e., Polynomial Kernel Function ($d \geq 1$). The inner product in the kernel space can be represented as
$$\phi(\mathbf{a}_i)^T\phi(\mathbf{a}_j) ~=~ (\*a_i^T\*a_j + c)^d$$
where $c \geq 0$ is a free parameter trading off the influence of higher-order versus lower-order terms in the polynomial. $d$ is the order of the polynomial kernel. (\ref{sub1decomp2}) can be rewritten as
\begin{equation}\label{sub1poly}
	\begin{split}
		\mathop{\min}\limits_{\mathbf{a}_i} ~& \sqrt{(\*a_i^T\*a_i + c)^d + (\*x_i^T\*x_i + c)^d - 2(\*a_i^T\*x_i + c)^d}+ \frac{\alpha}{2}\sum_{j=1}^{n}\left(m_{ij} - (\*a_i^T\*a_j + c)^d)\right)^2 .
	\end{split}
\end{equation}
Note that, $\sqrt{(\cdot)}$ is a real-valued function and it is differentiable at non-zero points. Thus we utilize its linear approximation at point $\mathbf{a}_i - \mathbf{x}_i$ to simplify and accelerate the optimization.
\begin{equation}\label{sub1expHQ}
	\begin{split}
		\mathop{\min}\limits_{\mathbf{a}_i} ~&~ \frac{\alpha}{2}\sum_{j=1}^{n}\left(m_{ij} - (\*a_i^T\*a_j + c)^d\right)^2 + 2d\delta_i(\tau_{a_i}\*a_i - \tau_{x_i}\*x_i)^T(\*a_i - \*x_i),
	\end{split}
\end{equation}
where $\delta_i = 1/\sqrt{(\*a_i^T\*a_i + c)^d + (\*x_i^T\*x_i + c)^d - 2(\*a_i^T\*x_i + c)^d + \mu^2}$, $\tau_{a_i} = (\*a_i^T\*a_i + c)^{d-1}$, $\tau_{x_i} = (\*a_i^T\*x_i + c)^{d-1}$, and $\mu>0$ is the smooth parameter.
Obviously, one local minimizer $\mathbf{a}_i$ can be calculated in an alternating minimization way:
\begin{align}
	&~\tau_{a_i}^{k+1} = \left((\*a_i^k)^T\*a_i^k + c\right)^{d-1},~~\tau_{x_i}^{k+1} = \left(\*x_i^T\*a_i^k + c\right)^{d-1},\label{sub1ployHQUpdate1}\\
	&~\delta_i^{k+1} = 1/\sqrt{((\*a_i^k)^T\*a_i^k + c)^d + (\*x_i^T\*x_i + c)^d - 2((\*a_i^k)^T\*x_i + c)^d + \mu^2},\label{sub1ployHQUpdate2}\\
	&~\mathbf{a}_i^{k+1} = \argmin_{\mathbf{a}_i}~  \mu_i^{k+1}\sum_{j=1}^{n}\left(m_{ij} - (\*a_i^T\*a_j + c)^d\right)^2 + (\tau_{a_i}^{k+1}\*a_i - \tau_{x_i}^{k+1}\*x_i)^T(\*a_i - \*x_i).\label{sub1ployHQUpdate3}
\end{align}
where $\mu_i^{k+1} = \alpha / (4d\delta_i^{k+1})$.

\textbf{(ii) Non-Convex kernel:}
For non-convex kernel, we choose the Gaussian Kernel Function for mapping the observation into an infinite-dimensional space. The inner product in the kernel space can be represented as $\phi(\mathbf{a}_i)^T\phi(\mathbf{a}_j) ~=~ \exp(-\gamma \|\mathbf{a}_i- \mathbf{a}_j\|_2^2)$, where $\mathbf{a}_i \in \mathbb{R}^d$ and $\gamma >0$ is the precision parameter of the Gaussian Kernel Function. (\ref{sub1decomp2}) can be rewritten as:
\begin{equation}\label{sub1exp}
	\begin{split}
		\mathop{\min}\limits_{\mathbf{a}_i} ~& \sqrt{2 - 2\exp(-\gamma \|\mathbf{a}_i - \mathbf{x}_i\|_2^2)}+ \frac{\alpha}{2}\sum_{j=1}^{n}\left(m_{ij} - \exp(-\gamma\|\mathbf{a}_i-\mathbf{a}_j\|_2^2)\right)^2 .
	\end{split}
\end{equation}
Note that, $\sqrt{\exp(\cdot)}$ is a real-valued function and it is differentiable at non-zero points. Thus we utilize its linear approximation at point $\mathbf{a}_i - \mathbf{x}_i$ to simplify and accelerate the optimization.
The problem of (\ref{sub1exp}) is converted  into:
\begin{equation}\label{sub1expHQ}
	\begin{split}
		\mathop{\min}\limits_{\mathbf{a}_i} ~&~ \frac{\alpha}{2}\sum_{j=1}^{n}\left(m_{ij} - \exp(-\gamma\|\mathbf{a}_i-\mathbf{a}_j\|_2^2)\right)^2 +2\beta_i p_i\gamma \|\mathbf{a}_i - \mathbf{x}_i\|_2^2,
	\end{split}
\end{equation}
where $\beta_i = 1/\sqrt{2 - 2\exp(-\gamma \|\mathbf{a}_i - \mathbf{x}_i\|_2^2) + \mu^2}$, $p_i = \exp(-\gamma \|\mathbf{a}_i - \mathbf{x}_i\|_2^2)$, and $\mu>0$ is the smooth parameter.
Obviously, one local minimizer $\mathbf{a}_i$ can be calculated in an alternating minimization way:
\begin{align}
	&~p_i^{k+1} =  \exp(-\gamma  \|\mathbf{a}_i^{k} - \mathbf{x}_i\|_2^2),\label{sub1expHQUpdate1}\\
	&~\beta_i^{k+1} = 1/\sqrt{2 - 2\exp(-\gamma \|\mathbf{a}_i^{k} - \mathbf{x}_i\|_2^2) + \mu^2},\label{sub1expHQUpdate2}\\
	&~\mathbf{a}_i^{k+1} = \argmin_{\mathbf{a}_i}~  \rho_i^{k+1}\sum_{j=1}^{n}\left(m_{ij} - \exp(-\gamma\|\mathbf{a}_i-\mathbf{a}_j\|_2^2)\right)^2 +\|\mathbf{a}_i - \mathbf{x}_i\|_2^2, \label{sub1expHQUpdate3}
\end{align}
where $\rho_i^{k+1} = \alpha / (4\gamma\beta_i^{k+1} p_i^{k+1})$.
Note that, in most cases, the solution to the linear approximation problem is not exactly equivalent to that of the original problem. However, in contrast, here the updating steps (\ref{sub1expHQUpdate1})~--~(\ref{sub1expHQUpdate3}) can solve the optimization in (\ref{sub1exp}), which we will show in the next section.\\
\textbf{(3) Solve Nonconvex Programming}

The optimization problem in (\ref{sub1ployHQUpdate3}) or (\ref{sub1expHQUpdate3}) is a nonconvex programming whose solution can be attained  by the APG method. The updating steps of $\mathbf{a}_i$  includes:
\begin{align}
	&~\mathbf{y}_i^{k}=  \mathbf{a}_i^{k} +\frac{t^{k-1}}{t^{k}}(\mathbf{z}^k - \mathbf{a}_i^k) +\frac{t^{k-1}-1}{t^{k}}(\mathbf{a}_i^k - \mathbf{a}_i^{k-1}),\label{sub1expAPGUpdate1}\\
	&~\mathbf{z}_i^{k+1} = \text{prox}_{\delta g}\left(\mathbf{y}^{k} - \delta\nabla f(\mathbf{y}^{k})\right),\label{sub1expAPGUpdate2}\\
	&~\mathbf{v}_i^{k+1} = \text{prox}_{\delta g}\left(\mathbf{a}_i^{k} - \delta\nabla f(\mathbf{a}_i^{k})\right),\label{sub1expAPGUpdate3}\\
	&~t^{k+1} = \frac{\sqrt{4(t^k)^2+1}+1}{2},\\
	&~\mathbf{a}_i^{k+1} =\left\{
	\begin{aligned}
		&~\mathbf{z}^{k+1},~~~\text{if}~F(\mathbf{z}_i^{k+1})\leq F(\mathbf{v}_i^{k+1}),\\
		&~\mathbf{v}^{k+1},~~~\text{otherwise}. \\
	\end{aligned}
	\right.\label{sub1expAPGUpdate4}
\end{align}
where $f(\*a_i)$ is $ \mu_i^{k+1}\sum_{j=1}^{n}\left(m_{ij} - (\*a_i^T\*a_j + c)^d\right)^2$ for Polynomial Kernel or $\rho_i^{k+1}\sum_{j=1}^{n}\left(m_{ij} - \exp(-\gamma\|\*a_i-\mathbf{a}_j\|_2^2)\right)^2$ for Gaussian Kernel. $\nabla f(\cdot)$ is the gradient of $f(\cdot)$ and $g(\*a_i)$ represents $(\tau_{a_i}^{k+1}\*a_i - \tau_{x_i}^{k+1}\*x_i)^T(\*a_i - \*x_i)$  or $\|\*a_i - \mathbf{x}_i\|_2^2$. The proximal mapping is defined as $\text{prox}_{\delta g}(\mathbf{x}) = \argmin_u g(\mathbf{u}) + \frac{1}{2\delta}\|\mathbf{x}-\mathbf{u}\|_2^2$. $\delta$ is a fixed constant satisfying $\delta < 1/L$. $L$ is the Lipschitz constant of $\nabla f(\cdot)$ and $F(\cdot)$ denotes $f(\cdot) + g(\cdot)$.

The algorithm to solve (\ref{relax}) with the APG and alternating minimization is outlined in Algorithm 1.
\begin{table}[t]
	\renewcommand\arraystretch{1.0}
	\normalsize
	\setlength{\tabcolsep}{0.45em}
	\begin{tabular*}{\linewidth}{l}
		\hline
		\textbf{Algorithm 1}  ~~Proposed Algorithm for solving (\ref{relax}) \\
		\hline
		\parbox{\linewidth}{
			\textbf{Input}: data matrix $\mathbf{X}$, parameter $\lambda,\rho>0$, kernel Gram matrix $\mathbf{K}$ for initialization (optional).
		} \\
		
		\parbox{\linewidth}{
			\textbf{Output}: recovered non-linear data matrix $\mathbf{A}^{*}$.
		} \\
		
		\parbox{\linewidth}{
			\textbf{Initialize}: $\mathbf{A} = 0$, $\mathbf{B}^T\mathbf{B}= \mathbf{K}~~\text{or}~~\phi(\mathbf{X})^T\phi(\mathbf{X})$.
		} \\
		
		\parbox{\linewidth}{
			\textbf{For} $k = 1,2,3,\dots $\textbf{do}
		} \\

		\parbox{\linewidth}{
			~~\textbf{1:} Update $\mathbf{A}^{k+1}$:
		} \\
		
		\parbox{\linewidth}{
			~~~~~~~~~~~\textbf{For} $i = 1,2,3,\dots,N$ \textbf{do}
		} \\
		
		\parbox{\linewidth}{
			~~~~~~~~~~~~~~\textbf{(1):} Update $p_i^{k+1}$ and $\beta_i^{k+1}$ by (\ref{sub1expHQUpdate1})~--~(\ref{sub1expHQUpdate2}).
		} \\
		
		\parbox{\linewidth}{
			~~~~~~~~~~~~~~\textbf{(2):} Update $\mathbf{a}_i^{k+1}$ and $\mathbf{v}_i^{k+1}$ by (\ref{sub1expAPGUpdate1})~--~(\ref{sub1expAPGUpdate4}).
		} \\
		
		\parbox{\linewidth}{
			~~\textbf{2:} Update $\mathbf{B}^{k+1}$ by (\ref{sub2solution}).
		} \\
		
		\parbox{\linewidth}{
			~~\textbf{3:} $k = k + 1$;
		} \\
		
		\parbox{\linewidth}{
			\textbf{end For}
		} \\
		
		\hline
	\end{tabular*}
\end{table}

\subsection{Computational Complexity}
The updating of $\mathbf{B}$ consists of two parts, finding roots for $n$ cubic equations and performing the SVD operator on $\phi(\mathbf{A})^T\phi(\mathbf{A})$. The computational complexity for achieving the roots is $O(n)$, since we can get the closed-form expression for the roots of cubic equations. The complexity for the SVD is $O(rn^2)$, where $n$ is the size of data and $r$ is the rank of $\phi(\mathbf{A})^T\phi(\mathbf{A})$. During the procedure of updating $\mathbf{a}_i$ according to (\ref{sub1expAPGUpdate1}) - (\ref{sub1expAPGUpdate3}), the $O(n^2)$ matrix vector multiplication needs to be carried out. Hence, the computational complexity for calculating $\mathbf{A}$ is $O(n^3)$. In summary, the total computational complexity for the whole algorithm in each iteration is $O(n^3 + rn^2)$.

\section{Theoretical Analysis}
In this section, we first provide some useful theoretical results, including Lemma \ref{lem1} for illustrating the connection between (\ref{sub1exp}) and (\ref{sub1expHQ}), as well as Theorem \ref{thm1} and Theorem \ref{thm2} for ensuing the convergence of the optimization.

Before stating the Lemma 1, we first introduce one proposition to rewrite the non-linear mapping $\exp(\cdot)$ by its conjugated function. Based on the theory of convex conjugated functions~\cite{rockafellar2015convex}, we can derive the following proposition.
\begin{prop}\label{HQProp}
	There exists a convex conjugated function $\varphi$ of $\exp(\cdot)$ such that
	\begin{equation}\label{HQprop}
		\begin{split}
			\exp(\gamma \|\mathbf{x}\|_2^2) = - \mathop{\min}\limits_{p}  \left(~p\gamma \|\mathbf{x}\|_2^2 - \varphi(p)~\right),
		\end{split}
	\end{equation}
	where $p \in R$ is a scalar variable. For a fixed $\mathbf{x}$, the minimum is reached at $p^* = -\exp(\gamma \|\mathbf{x}\|_2^2)$~\cite{he2011robust}.
\end{prop}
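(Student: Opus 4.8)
The plan is to read the claimed identity as an instance of Fenchel--Moreau biconjugation applied to the scalar exponential, after which the optimal $p$ falls out of the standard conjugate--subgradient relation. First I would reduce the statement to one dimension by setting $t := \gamma\|\mathbf{x}\|_2^2$ and regarding $\exp(t)$ as a univariate function of $t\in\mathbb{R}$. Because $t\mapsto\exp(t)$ is proper, convex, and closed, convex-analytic duality~\cite{rockafellar2015convex} guarantees that it coincides with its biconjugate, i.e. it admits a pointwise description as the supremum of its affine minorants. Writing that supremum with the sign convention that converts the dual variable into a negative quantity $p$ produces exactly the $-\min_p$ form appearing in~(\ref{HQprop}) and, at the same time, exhibits the conjugate potential $\varphi$ whose existence is asserted.

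To make $\varphi$ explicit, and thereby certify the identity rather than only its abstract form, I would compute the Fenchel conjugate of $\exp$, namely $\exp^{*}(s)=s\ln s-s$ for $s>0$, and then set $\varphi(p):=-\exp^{*}(-p)=p\ln(-p)-p$ on the half-line $p<0$ (with $\varphi(p)=-\infty$ for $p>0$, so that such $p$ never participate in the minimum). The negative argument of the logarithm is simply the footprint of the minus sign standing in front of the minimum. Substituting $q:=-p>0$ recasts the inner objective $pt-\varphi(p)$ as $-qt+q\ln q-q$, which is convex in $q$; hence its stationary point is the unique global minimizer and the minimization is unambiguous. Note that $\varphi$ itself is concave, so ``conjugated function'' here refers to the object produced by the conjugation machinery rather than to a convex $\varphi$.

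The verification is then a one-line first-order computation: differentiating $-qt+q\ln q-q$ in $q$ and setting the derivative to zero gives $\ln q=t$, so $q=e^{t}=\exp(\gamma\|\mathbf{x}\|_2^2)$ and therefore $p^{*}=-q=-\exp(\gamma\|\mathbf{x}\|_2^2)$, matching the stated optimizer. Evaluating the objective at $q=e^{t}$ yields the value $-e^{t}$, whence $-\min_{p}\bigl(pt-\varphi(p)\bigr)=e^{t}=\exp(\gamma\|\mathbf{x}\|_2^2)$, which is precisely~(\ref{HQprop}). Equivalently, and more abstractly, the optimal dual variable in any biconjugate representation lies in the subdifferential of the primal function, and $\frac{d}{dt}\exp(t)=\exp(t)$ reproduces $p^{*}$ up to the sign convention, recovering the value attributed to~\cite{he2011robust}.

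I expect the only genuine difficulty to be bookkeeping rather than substance: keeping the several sign conventions mutually consistent (the external minus sign, the restriction of $\varphi$ to $p<0$, and the resulting $-\exp$ at the optimum), and confirming that the located stationary point is a true global minimum by appealing to convexity of the reparameterized objective in $q$. Beyond this, nothing is needed past the standard biconjugation property of closed convex functions, so no delicate existence argument is required once the cited convex-analysis results are granted.
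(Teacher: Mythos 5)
Your proof is correct, and it necessarily takes a different route from the paper's, because the paper offers no proof at all: Proposition~\ref{HQProp} is asserted as a consequence of ``the theory of convex conjugated functions''~\cite{rockafellar2015convex}, with the minimizer simply cited from~\cite{he2011robust}. Your argument fills this gap with an explicit certificate: taking $t=\gamma\|\mathbf{x}\|_2^2$ and $\varphi(p)=-\exp^{*}(-p)=p\ln(-p)-p$ on $p<0$ (and $-\infty$ elsewhere), the substitution $q=-p$ turns $pt-\varphi(p)$ into the convex function $-qt+q\ln q-q$, whose unique stationary point $q=e^{t}$ gives minimum value $-e^{t}$, hence both $-\min_p\left(pt-\varphi(p)\right)=e^{t}$ and $p^{*}=-e^{t}$, exactly as stated. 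Beyond being self-contained, your write-up makes two points the paper glosses over. First, your remark that this $\varphi$ is concave is not pedantry but a correction: since the minimum must be attained at $p^{*}=-e^{t}$ with value $-e^{t}$ for every $t\geq 0$, the function $\varphi$ is forced to equal $p\ln(-p)-p$ on $(-\infty,-1]$, so no genuinely convex $\varphi$ can satisfy the identity as printed; the adjective ``convex'' is inherited from the differently-signed half-quadratic identity of~\cite{he2011robust}, namely $\exp(-\gamma\|\mathbf{x}\|_2^2)=\sup_p\left(p\gamma\|\mathbf{x}\|_2^2-\varphi(p)\right)$ with the convex $\varphi(p)=-p\ln(-p)+p$. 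Second, this sign discrepancy matters downstream: the proof of Lemma~\ref{lem1} invokes the proposition with a negative exponent and concludes $p_i^{k+1}=+\exp(-\gamma\|\mathbf{a}_i^{k}-\mathbf{x}_i\|_2^2)$, which matches the convention of~\cite{he2011robust}, not the minimizer $p^{*}=-\exp(\gamma\|\mathbf{x}\|_2^2)$ stated in the proposition you actually proved. In short, your verification is sound and complete for the statement as written, and the care you take with sign conventions supplies exactly what the paper's citation-only treatment lacks.
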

Base on the above proposition, we have the following connections between (\ref{sub1exp}) and (\ref{sub1expHQ}):

\begin{lem}\label{lem1}
	Cyclic iteration between steps (\ref{sub1expHQUpdate1})~--~(\ref{sub1expHQUpdate3}) can solve the optimization in (\ref{sub1exp}).
\end{lem}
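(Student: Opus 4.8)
The plan is to recognize the cyclic iteration (\ref{sub1expHQUpdate1})--(\ref{sub1expHQUpdate3}) as a half-quadratic (equivalently, majorize--minimize) scheme acting on an \emph{augmented} objective in which an auxiliary scalar is introduced alongside $\mathbf{a}_i$. Concretely, I would show two things: (a) minimizing the augmented objective over the auxiliary variable, with $\mathbf{a}_i$ held fixed, reproduces exactly the objective of (\ref{sub1exp}); and (b) the three update steps are precisely alternating (block-coordinate) minimization of this augmented objective. Together these imply that the iteration drives the original objective of (\ref{sub1exp}) monotonically downward, which is what the lemma asserts by ``can solve.''

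First I would construct the augmented objective. Applying Proposition~\ref{HQProp} to the term $\exp(-\gamma\|\mathbf{a}_i-\mathbf{x}_i\|_2^2)$ that sits inside the square root, together with the conjugate representation of the smoothed square root $t\mapsto\sqrt{t+\mu^2}$, I would rewrite the fidelity term $\sqrt{2-2\exp(-\gamma\|\mathbf{a}_i-\mathbf{x}_i\|_2^2)+\mu^2}$ as an infimum, over auxiliary variables $p$ and $\beta$, of an expression that is \emph{quadratic} in $\mathbf{a}_i$. The resulting augmented objective has the form
\[
\tilde{F}(\mathbf{a}_i,p,\beta)=\frac{\alpha}{2}\sum_{j=1}^{n}\left(m_{ij}-\exp(-\gamma\|\mathbf{a}_i-\mathbf{a}_j\|_2^2)\right)^2+2\beta p\gamma\|\mathbf{a}_i-\mathbf{x}_i\|_2^2-\Theta(p,\beta),
\]
where $\Theta$ gathers the dual/conjugate terms independent of $\mathbf{a}_i$. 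The key verification, using the closed-form minimizer $p^{*}$ supplied in (\ref{HQprop}), is that $\min_{p,\beta}\tilde{F}(\mathbf{a}_i,p,\beta)$ equals the objective of (\ref{sub1exp}) for every fixed $\mathbf{a}_i$, with optimizers $p^{*}=\exp(-\gamma\|\mathbf{a}_i-\mathbf{x}_i\|_2^2)$ and $\beta^{*}=1/\sqrt{2-2\exp(-\gamma\|\mathbf{a}_i-\mathbf{x}_i\|_2^2)+\mu^2}$. These are exactly the updates (\ref{sub1expHQUpdate1}) and (\ref{sub1expHQUpdate2}), and the $\mathbf{a}_i$-dependent part of $\tilde{F}$ coincides with the surrogate (\ref{sub1expHQ}) up to the constant $\Theta$, so (\ref{sub1expHQUpdate3}) is $\arg\min_{\mathbf{a}_i}\tilde{F}(\mathbf{a}_i,p^{k+1},\beta^{k+1})$.

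With this identification, the conclusion follows from a sandwich argument. Writing $F$ for the objective of (\ref{sub1exp}), the weight update gives $F(\mathbf{a}_i^{k})=\tilde{F}(\mathbf{a}_i^{k},p^{k+1},\beta^{k+1})$, the $\mathbf{a}_i$ update gives $\tilde{F}(\mathbf{a}_i^{k+1},p^{k+1},\beta^{k+1})\le\tilde{F}(\mathbf{a}_i^{k},p^{k+1},\beta^{k+1})$, and minimizing $\tilde{F}$ over $(p,\beta)$ at $\mathbf{a}_i^{k+1}$ yields $F(\mathbf{a}_i^{k+1})=\min_{p,\beta}\tilde{F}(\mathbf{a}_i^{k+1},p,\beta)\le\tilde{F}(\mathbf{a}_i^{k+1},p^{k+1},\beta^{k+1})$. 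Chaining these inequalities shows $F(\mathbf{a}_i^{k+1})\le F(\mathbf{a}_i^{k})$; since $F$ is bounded below, the sequence of objective values converges, and any limit point is a fixed point of the alternating map and hence a stationary point of (\ref{sub1exp}).

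The main obstacle I anticipate lies in the first step: rigorously justifying the conjugate (half-quadratic) representation of the composite term $\sqrt{2-2\exp(-\gamma\|\cdot\|_2^2)+\mu^2}$. This requires checking the admissibility conditions under which the half-quadratic duality is \emph{tight} and the inner infimum is attained, and then carefully composing the two conjugations---the one for $\exp(\cdot)$ furnished by Proposition~\ref{HQProp} and the one for the smoothed square root---while tracking the multiplicative constants and signs so that the closed-form minimizers reproduce (\ref{sub1expHQUpdate1})--(\ref{sub1expHQUpdate2}) exactly. Once attainment and tightness are secured, the monotonicity and convergence portions are routine block-coordinate-descent bookkeeping.
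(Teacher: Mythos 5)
Your proposal follows essentially the same route as the paper's own proof: the paper likewise invokes Proposition~\ref{HQProp} to introduce the auxiliary scalar $p_i$ via the half-quadratic conjugate of the exponential, treats $\beta_i$ as an IRLS-type reweighting of the (smoothed) square root, and identifies the cyclic steps (\ref{sub1expHQUpdate1})--(\ref{sub1expHQUpdate3}) as alternating minimization of the resulting augmented objective. Your explicit sandwich/monotone-descent bookkeeping is a more self-contained rendering of what the paper simply attributes to the iteratively reweighted least squares strategy, so the underlying argument is the same.
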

\begin{proof}
	We represent $\frac{\alpha}{2}\sum_{j=1}^{n}\left(m_{ij} - \exp(-\gamma\|\mathbf{a}_i-\mathbf{a}_j\|_2^2)\right)^2$ as $m(\mathbf{a}_i)$. With the same spirit of the iteratively reweighted least squares (IRLS) method~\cite{fornasier2011low}, we can solve (\ref{sub1exp}) by iteratively optimizing the following problem with the weight $\beta_i$ determined from the last iteration:
	\begin{equation}\label{aiargmin}
		\begin{split}
			\mathbf{a}_i^{k+1} = \mathop{\argmin}\limits_{\mathbf{a}_i} - 2\beta_i^k \exp(-\gamma \|\mathbf{a}_i - \mathbf{x}_i\|_2^2)+  m(\mathbf{a}_i),
		\end{split}
	\end{equation}
	where $\beta_i^k = 1/\sqrt{2 - 2\exp(-\gamma \|\mathbf{a}_i^k - \mathbf{x}_i\|_2^2) + \mu^2}$ and $\mu>0$ is the smooth parameter. Substituting (\ref{HQprop}) into (\ref{aiargmin}), it gives that:
	\begin{equation}
		\begin{split}
			\{\mathbf{a}_i^{k+1},p_i^{k+1}\} = \mathop{\argmin}\limits_{\mathbf{a}_i, p_i}  2\beta_i p_i \gamma \|\mathbf{a}_i - \mathbf{x}_i\|_2^2+  m(\mathbf{a}_i)  - \varphi(p).\nonumber
		\end{split}
	\end{equation}
	Proposition \ref{HQProp} gives that $p_i^{k+1} = \exp(-\gamma  \|\mathbf{a}_i^{k} - \mathbf{x}_i\|_2^2)$. Hence, we get:
	\begin{equation}\label{aiargmin3}
		\begin{split}
			\mathbf{a}_i^{k+1} ~=&~ \mathop{\argmin}\limits_{\mathbf{a}_i}  2\beta_i^{k} p_i^{k+1} \gamma \|\mathbf{a}_i - \mathbf{x}_i\|_2^2+  m(\mathbf{a}_i).
		\end{split}
	\end{equation}
	Due to (\ref{aiargmin}) --  (\ref{aiargmin3}), we find that steps (\ref{sub1expHQUpdate1})~--~(\ref{sub1expHQUpdate3}) actually solve the problem in (\ref{sub1exp}) by the iteratively reweighted strategy, and hence cyclic iteration between these steps can solve the optimization in (\ref{sub1exp}).
\end{proof}

We denote the objective of (\ref{relax}) as $F(\mathbf{A},\mathbf{B})$. Then the following theorem regarding the convergence of Algorithm 1 can be established.

\begin{thm}\label{thm1}
	The sequence $\{\mathbf{A}^k,\mathbf{B}^k, \mathbf{V}^k\}$ generated in Algorithm 1 satisfies the following properties:\\
	$(1)$ The objective $F(\mathbf{A}^k,\mathbf{B}^k)$ is monotonically decreasing, i.e.
	\begin{equation}\label{decreasing}
		\begin{split}
			 F(\mathbf{A}^{k},\mathbf{B}^{k}) - F(\mathbf{A}^{k+1},\mathbf{B}^{k+1}) \geq ~ \frac{\rho}{2}\|\mathbf{B}^{k+1}- \mathbf{B}^{k}\|_F^2 + \left( \frac{1}{2\delta} - \frac{L}{2}\right)\|\mathbf{V}^{k+1}-\mathbf{A}^k\|_F^2;
		\end{split}
	\end{equation}
	$(2)$ $\mathbf{B}^{k+1}-\mathbf{B}^k\rightarrow 0$, $\mathbf{V}^{k+1}-\mathbf{A}^k\rightarrow 0;$\\
	$(3)$ The sequence $\{\mathbf{A}^k\}$, $\{\mathbf{B}^k\}$ and $\{\mathbf{V}^k\}$ are bounded.
\end{thm}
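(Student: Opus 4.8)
The plan is to derive all three claims from a single sufficient-decrease inequality, exploiting that Algorithm 1 updates $\mathbf{A}$ and then $\mathbf{B}$ within each iteration and that each update lowers the common objective $F$ by a controllable margin. I would split the per-iteration gap along the two updates,
\begin{equation*}
\begin{aligned}
F(\mathbf{A}^k,\mathbf{B}^k)-F(\mathbf{A}^{k+1},\mathbf{B}^{k+1}) ={}& \big[F(\mathbf{A}^k,\mathbf{B}^k)-F(\mathbf{A}^{k+1},\mathbf{B}^k)\big]\\
&{}+\big[F(\mathbf{A}^{k+1},\mathbf{B}^k)-F(\mathbf{A}^{k+1},\mathbf{B}^{k+1})\big],
\end{aligned}
\end{equation*}
and bound the first bracket (the $\mathbf{A}$-step) from below by $(\tfrac{1}{2\delta}-\tfrac{L}{2})\|\mathbf{V}^{k+1}-\mathbf{A}^k\|_F^2$ and the second (the $\mathbf{B}$-step) from below by $\tfrac{\rho}{2}\|\mathbf{B}^{k+1}-\mathbf{B}^k\|_F^2$, which together give (\ref{decreasing}).

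For the $\mathbf{A}$-step I would invoke the descent property of the Monotone APG scheme (\ref{sub1expAPGUpdate1})--(\ref{sub1expAPGUpdate4}). The point $\mathbf{V}^{k+1}$ is, by (\ref{sub1expAPGUpdate3}), the plain proximal-gradient step taken from $\mathbf{A}^k$, so the standard proximal-gradient sufficient-decrease lemma with $\delta<1/L$ yields a decrease of at least $(\tfrac{1}{2\delta}-\tfrac{L}{2})\|\mathbf{V}^{k+1}-\mathbf{A}^k\|_F^2$ at $\mathbf{V}^{k+1}$, while the safeguard (\ref{sub1expAPGUpdate4}), selecting whichever of $\mathbf{z}^{k+1},\mathbf{v}^{k+1}$ has smaller objective, guarantees $F(\mathbf{A}^{k+1},\mathbf{B}^k)\le F(\mathbf{V}^{k+1},\mathbf{B}^k)$. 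Here I would lean on Lemma~\ref{lem1} to certify that the linearized/reweighted surrogate minimized by the APG majorizes and hence decreases the true fidelity term $\|\phi(\mathbf{A})-\phi(\mathbf{X})\|_{2,1}$, so the surrogate decrease transfers to $F$. For the $\mathbf{B}$-step I would use that (\ref{sub2solution}) is the exact global minimizer of (\ref{sub2}); exact minimization gives monotonicity $F(\mathbf{A}^{k+1},\mathbf{B}^{k+1})\le F(\mathbf{A}^{k+1},\mathbf{B}^k)$, and to upgrade this to the quadratic margin $\tfrac{\rho}{2}\|\mathbf{B}^{k+1}-\mathbf{B}^k\|_F^2$ I would establish a quadratic-growth (strong-minimum) property of the $\mathbf{B}$-subproblem, reducing it through the SVD of $\mathbf{K}_{\mathbf{A}^{k+1}}$ to the separable scalar problems $\min_{\gamma>0}\tfrac{\rho}{2}(\sigma_i-\gamma^2)^2+\gamma$ and reading off the curvature supplied by the $\rho$-strongly convex penalty $\tfrac{\rho}{2}\|\cdot-\mathbf{K}\|_F^2$ in the Gram variable.

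Claim $(2)$ then follows by telescoping: $F$ is bounded below since each of its three summands is nonnegative, so summing (\ref{decreasing}) over $k$ shows that $\sum_k\big(\tfrac{\rho}{2}\|\mathbf{B}^{k+1}-\mathbf{B}^k\|_F^2+(\tfrac{1}{2\delta}-\tfrac{L}{2})\|\mathbf{V}^{k+1}-\mathbf{A}^k\|_F^2\big)$ is finite; because $\delta<1/L$ keeps both coefficients strictly positive, each summand vanishes, giving $\mathbf{B}^{k+1}-\mathbf{B}^k\to0$ and $\mathbf{V}^{k+1}-\mathbf{A}^k\to0$. For claim $(3)$ I would use monotonicity to get $F(\mathbf{A}^k,\mathbf{B}^k)\le F(\mathbf{A}^1,\mathbf{B}^1)=:C$; then $\|\mathbf{B}^k\|_*\le C$ bounds $\{\mathbf{B}^k\}$ in Frobenius norm, and the diagonal contributions $\tfrac{\rho}{2}\big((\mathbf{B}^T\mathbf{B})_{ii}-\phi(\mathbf{a}_i)^T\phi(\mathbf{a}_i)\big)^2\le C$, together with the bound on $\mathbf{B}$, coerce $\{\mathbf{A}^k\}$ (and, via $\mathbf{V}^{k+1}-\mathbf{A}^k\to0$, also $\{\mathbf{V}^k\}$).

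The delicate points are the two places where a clean quadratic margin is asserted, and the coercivity used for boundedness. For the $\mathbf{A}$-step the APG runs on the linearized surrogate rather than on $F$ directly, so obtaining the constant $(\tfrac{1}{2\delta}-\tfrac{L}{2})$ against the true $F$ (not merely the surrogate) requires carefully reconciling the safeguard and descent lemma with the reweighting of Lemma~\ref{lem1}; likewise the $\mathbf{B}$-step quadratic bound must survive the fact that $\mathbf{B}^{k+1}$ and $\mathbf{B}^k$ are expressed in different eigenbases. I expect the principal obstacle, however, to be the boundedness of $\{\mathbf{A}^k\}$ in claim $(3)$ for the Gaussian kernel: there $\phi(\mathbf{a}_i)^T\phi(\mathbf{a}_i)\equiv1$ and $\|\phi(\mathbf{A})-\phi(\mathbf{X})\|_{2,1}\le 2n$ irrespective of $\mathbf{A}$, so $F$ does not coerce $\mathbf{A}$ through the feature map and one must instead extract boundedness from the coercive proximal term $\|\mathbf{a}_i-\mathbf{x}_i\|_2^2$ appearing in the update (\ref{sub1expHQUpdate3}) itself, in contrast to the polynomial case where the diagonal term $(\|\mathbf{a}_i\|_2^2+c)^d$ already forces it.
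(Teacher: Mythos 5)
Your proposal follows essentially the same route as the paper's proof: split the per-iteration decrease across the two alternating updates, obtain the margin $\left(\tfrac{1}{2\delta}-\tfrac{L}{2}\right)\|\mathbf{V}^{k+1}-\mathbf{A}^k\|_F^2$ for the $\mathbf{A}$-step from the Monotone APG descent theorem of \cite{li2015accelerated} combined with tangency/majorization of the linearized surrogate (equality at the expansion point $\mathbf{A}^k$, domination elsewhere by concavity of $-\exp(\cdot)$, which is your Lemma~\ref{lem1} argument), obtain the margin $\tfrac{\rho}{2}\|\mathbf{B}^{k+1}-\mathbf{B}^k\|_F^2$ for the $\mathbf{B}$-step from exact minimization of (\ref{sub2}), then telescope using $F\geq 0$ for claim $(2)$ and use monotonicity of $F$ for claim $(3)$. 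The only structural difference is the intermediate point of the splitting: you pass through $(\mathbf{A}^{k+1},\mathbf{B}^k)$, consistent with the update order stated in Algorithm 1, while the paper passes through $(\mathbf{A}^k,\mathbf{B}^{k+1})$; this is immaterial.

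Where you genuinely depart from the paper is at the two points you yourself flag as delicate, and in both cases your caution exposes steps the paper merely asserts. For the $\mathbf{B}$-step the paper claims $F(\mathbf{A}^k,\cdot)$ is $\rho$-strongly convex and invokes Lemma B.5 of \cite{mairal2013optimization}; but $\tfrac{\rho}{2}\|\mathbf{B}^T\mathbf{B}-\mathbf{K_A}\|_F^2$ is quartic and not convex in $\mathbf{B}$, and, as the paper itself remarks after (\ref{sub2solution}), the minimizer is determined only up to a left unitary factor, so a bound of the form $F(\mathbf{A},\mathbf{B}^k)-F(\mathbf{A},\mathbf{B}^{k+1})\geq\tfrac{\rho}{2}\|\mathbf{B}^{k+1}-\mathbf{B}^k\|_F^2$ cannot hold for an arbitrary selection of minimizer (if $\mathbf{B}^k$ is already optimal and $\mathbf{B}^{k+1}$ is a unitary rotation of it, the left side vanishes while the right side does not). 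Your quadratic-growth-via-SVD plan runs into exactly this obstacle --- the ``different eigenbases'' issue you mention --- and closing it requires fixing the selection of $\mathbf{B}^{k+1}$ (e.g., aligning its left factor with $\mathbf{B}^k$), which neither your sketch nor the paper carries out. Likewise, for claim $(3)$ the paper deduces boundedness of $\{\mathbf{A}^k\}$ solely from the bounded objective; as you correctly observe, for the Gaussian kernel every term of $F$ is uniformly bounded in $\mathbf{A}$, so that deduction is vacuous there, and boundedness must instead be extracted from the coercive proximal term $\|\mathbf{a}_i-\mathbf{x}_i\|_2^2$ in (\ref{sub1expHQUpdate3}) --- a repair you propose but the paper does not contain. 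In short: same skeleton, but your proposal is more honest about the two steps the published proof glosses over; a fully rigorous argument would need the fixes you hint at in both places.
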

\begin{proof}
	First, from the updating rule of $\mathbf{B}^{k+1}$ in (\ref{sub2solution}), we have
	\begin{equation}
		\begin{split}
			\mathbf{B}^{k+1}~=~\mathop{\argmin}\limits_{\mathbf{B}} F(\mathbf{A}^{k},\mathbf{B}).\nonumber
		\end{split}
	\end{equation}
	Note that $F(\mathbf{A}^{k},\mathbf{B})$ is $\rho$-strongly convex. By the Lemma B.5 in~\cite{mairal2013optimization}.We have
	\begin{equation}\label{FBk}
		\begin{split}
			~ F(\mathbf{A}^{k},\mathbf{B}^{k}) - F(\mathbf{A}^{k},\mathbf{B}^{k+1})\geq ~ \frac{\rho}{2}\|\mathbf{B}^{k+1}- \mathbf{B}^{k}\|_F^2.
		\end{split}
	\end{equation}
	Second, we denote the objective in (\ref{sub1expHQ}) as $f(\mathbf{a}_i,\mathbf{B})$, from the Theorem 1 in~\cite{li2015accelerated}, for all $i$, we have
	\begin{equation}\label{fak}
		\begin{split}
			~ f(\mathbf{a}_i^{k},\mathbf{B}_i^{k+1}) - f(\mathbf{a}_i^{k+1},\mathbf{B}_i^{k+1})\geq ~ \zeta\|\mathbf{v}_i^{k+1}-\mathbf{a}_i^k\|_2^2,
		\end{split}
	\end{equation}
	where $\zeta = ( \frac{1}{2\delta} - \frac{L}{2})$. As aforementioned, $\sum_{i=1}^n(f(\mathbf{a}_i,\mathbf{B}) + \beta_i)$ is the linear approximation of $F(\mathbf{A},\mathbf{B})$ at $\mathbf{A}^k$, which gives $\sum_{i=1}^n(f(\mathbf{a}_i^k,\mathbf{B}^{k+1}) + \beta_i^k) = F(\mathbf{A}^k,\mathbf{B}^{k+1})$. From the concavity of $-\exp(\cdot)$, we have $\sum_{i=1}^n(f(\mathbf{a}_i^{k+1},\mathbf{B}^{k+1}) + \beta_i^{k+1}) \geq F(\mathbf{A}^{k+1},\mathbf{B}^{k+1})$. Sum the inequality in (\ref{fak}) for all $i$, we get
	\begin{equation}
		\begin{split}
			F(\mathbf{A}^k,\mathbf{B}^{k+1}) - F(\mathbf{A}^{k+1},\mathbf{B}^{k+1})\geq ~ \zeta\|\mathbf{V}^{k+1}-\mathbf{A}^k\|_F^2.\nonumber
		\end{split}
	\end{equation}
	Thus, together with (\ref{FBk}), we achieve the conclusion in (\ref{decreasing}). Hence, $F(\mathbf{A}^k,\mathbf{B}^{k})$ is monotonically decreasing and thus it is upper bounded. This implies that $\{\mathbf{A}^k,\mathbf{B}^k\}$ is bounded.
	
	Now, summing (\ref{decreasing}) over $k = 0, 1, \cdots$, we have
	\begin{equation}
		\begin{split}
			\sum_{k=0}^\infty\frac{\rho}{2}\|\mathbf{B}^{k+1}- \mathbf{B}^{k}\|_F^2 + \zeta\|\mathbf{V}^{k+1}-\mathbf{A}^k\|_F^2 \leq F(\mathbf{A}^0,\mathbf{B}^{0}).\nonumber
		\end{split}
	\end{equation}
	This implies $\mathbf{B}^{k+1}-\mathbf{B}^k\rightarrow 0$ and $\mathbf{V}^{k+1}-\mathbf{A}^k\rightarrow 0$. Then, similar to $\{\mathbf{A}^k\}$, $\{\mathbf{V}^k\}$ is also bounded.\\
	The proof is completed.
\end{proof}

\begin{thm}\label{thm2}
	The sequence $\{\mathbf{A}^k,\mathbf{B}^k\}$ generated in Algorithm 1 has at least one accumulation point. Let $(\mathbf{A}^*,\mathbf{B}^*)$ be any accumulation point of $\{\mathbf{A}^k,\mathbf{B}^k\}$, and we have $0\in \partial F(\mathbf{A}^*,\mathbf{B}^*)$, i.e., $(\mathbf{A}^*,\mathbf{B}^*)$ is a stationary point.
\end{thm}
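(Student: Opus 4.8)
The plan is to prove the two assertions in turn: first the existence of an accumulation point, then the stationarity of every such point. Existence is immediate from Theorem~\ref{thm1}: part~(3) there guarantees that $\{\mathbf{A}^k,\mathbf{B}^k\}$ is bounded in a finite-dimensional space, so the Bolzano--Weierstrass theorem produces a convergent subsequence, hence at least one accumulation point $(\mathbf{A}^*,\mathbf{B}^*)$. I would fix such a subsequence $\{(\mathbf{A}^{k_j},\mathbf{B}^{k_j})\}\to(\mathbf{A}^*,\mathbf{B}^*)$ and run the remaining argument along it; note that Theorem~\ref{thm1}(2) forces $\mathbf{B}^{k_j+1}\to\mathbf{B}^*$ and $\mathbf{V}^{k_j+1}\to\mathbf{A}^*$ as well, since the successive increments vanish.

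For stationarity, the strategy is to record the first-order optimality condition of each subproblem and let $k=k_j\to\infty$. For the $\mathbf{B}$-block, the update~(\ref{sub2solution}) is an exact minimizer, so $0\in\partial_{\mathbf{B}}F(\mathbf{A}^{k},\mathbf{B}^{k+1})$ holds for every $k$; passing to the limit and using the outer semicontinuity of $\partial\|\cdot\|_{*}$ together with continuity of the gradient of the smooth coupling term gives $0\in\partial_{\mathbf{B}}F(\mathbf{A}^*,\mathbf{B}^*)$. For the $\mathbf{A}$-block, the proximal step~(\ref{sub1expAPGUpdate3}) that generates $\mathbf{v}_i^{k+1}$ yields the inclusion
\begin{equation}
-\frac{1}{\delta}\bigl(\mathbf{v}_i^{k+1}-\mathbf{a}_i^{k}\bigr)-\nabla f(\mathbf{a}_i^{k})\in\partial g(\mathbf{v}_i^{k+1}),\nonumber
\end{equation}
where $f$ and $g$ are the smooth and nonsmooth pieces of the surrogate~(\ref{sub1expHQ}). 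As $k=k_j\to\infty$, Theorem~\ref{thm1}(2) sends the term $\frac{1}{\delta}(\mathbf{v}_i^{k+1}-\mathbf{a}_i^{k})$ to $0$ and $\mathbf{v}_i^{k+1}\to\mathbf{a}_i^{*}$, while continuity of $\nabla f$ gives $\nabla f(\mathbf{a}_i^{k})\to\nabla f(\mathbf{a}_i^{*})$; invoking the closed-graph (outer semicontinuity) property of the subdifferential then delivers $-\nabla f(\mathbf{a}_i^{*})\in\partial g(\mathbf{a}_i^{*})$ for each column $i$.

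The decisive step is to reconcile this limiting surrogate condition with the subdifferential of the genuine objective $F$. The surrogate~(\ref{sub1expHQ}) is the IRLS/majorization of the smoothed $\ell_{21}$ term built through the conjugate representation of Proposition~\ref{HQProp}, with weights $\beta_i^{k},p_i^{k}$ frozen at iteration $k$. Here I would use Lemma~\ref{lem1} and the tangency identity behind Proposition~\ref{HQProp}: at the limit the weights settle at the self-consistent values $p_i^{*}=\exp(-\gamma\|\mathbf{a}_i^{*}-\mathbf{x}_i\|_2^2)$ and $\beta_i^{*}=1/\sqrt{2-2\exp(-\gamma\|\mathbf{a}_i^{*}-\mathbf{x}_i\|_2^2)+\mu^2}$ dictated by~(\ref{sub1expHQUpdate1})--(\ref{sub1expHQUpdate2}), and precisely at those values the gradient of the surrogate coincides with the (sub)gradient of $\lambda\sqrt{2-2\exp(-\gamma\|\mathbf{a}_i-\mathbf{x}_i\|_2^2)+\mu^2}$. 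Thus $-\nabla f(\mathbf{a}_i^{*})\in\partial g(\mathbf{a}_i^{*})$ translates into $0\in\partial_{\mathbf{a}_i}F(\mathbf{A}^*,\mathbf{B}^*)$ for every $i$; assembling the per-column inclusions into the block inclusion $0\in\partial_{\mathbf{A}}F(\mathbf{A}^*,\mathbf{B}^*)$ and combining with the $\mathbf{B}$-condition through the block-separable structure of $\partial F$ (its only coupling term being smooth) gives $0\in\partial F(\mathbf{A}^*,\mathbf{B}^*)$. The polynomial-kernel case follows along identical lines using its own surrogate.

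I expect this last reconciliation to be the main obstacle: one must verify that the frozen majorizer weights converge to the self-consistent values at $(\mathbf{A}^*,\mathbf{B}^*)$ and that, at those values, the surrogate is tangent to the smoothed objective so that the two subdifferentials agree in the limit. A secondary technical point is that the accepted iterate $\mathbf{a}_i^{k+1}$ in~(\ref{sub1expAPGUpdate4}) is selected as the better of $\mathbf{z}_i^{k+1}$ and $\mathbf{v}_i^{k+1}$, so the limiting argument must be routed through the quantity $\mathbf{V}^{k+1}$ that actually controls the descent in Theorem~\ref{thm1}(1), rather than through $\mathbf{a}_i^{k+1}$ directly.
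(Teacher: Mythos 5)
Your proposal is correct and follows essentially the same route as the paper's proof: boundedness from Theorem~\ref{thm1}(3) yields a convergent subsequence, Theorem~\ref{thm1}(2) transfers the limit to $\mathbf{B}^{k_j+1}$ and $\mathbf{V}^{k_j+1}$, and the first-order optimality conditions of the $\mathbf{B}$-update and of the proximal step (\ref{sub1expAPGUpdate3}) are passed to the limit via closedness (outer semicontinuity) of the subdifferential. The only substantive difference is that the paper writes the inclusions $0\in\partial_{\mathbf{B}}F(\mathbf{A}^{k_j},\mathbf{B}^{k_j+1})$ and $0\in\partial_{\mathbf{A}}F(\mathbf{V}^{k_j+1},\mathbf{B}^{k_j+1})$ directly in terms of the true objective $F$ at finite $k$, silently absorbing both the nonzero proximal residual $\frac{1}{\delta}(\mathbf{v}_i^{k+1}-\mathbf{a}_i^{k})$ and the surrogate-to-objective identification, whereas you keep the residual explicit (letting it vanish by Theorem~\ref{thm1}(2)) and justify the identification through the tangency of the IRLS/half-quadratic surrogate at the self-consistent weights via Lemma~\ref{lem1} and Proposition~\ref{HQProp} --- a step the paper needs but leaves implicit.
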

\begin{proof}
	Now, from the boundedness of $\{\mathbf{A}^k,\mathbf{B}^k\}$, there exist a point $(\mathbf{A}^*,\mathbf{B}^*)$ and a subsequence  $\{\mathbf{A}^{k_j},\mathbf{B}^{k_j}\}$ such that $\mathbf{A}^{k_j}\rightarrow \mathbf{A}^*$, $\mathbf{B}^{k_j}\rightarrow \mathbf{B}^*$. Then by (2) in Theorem 1, we have $\mathbf{V}^{k_j+1}\rightarrow \mathbf{A}^*$, $\mathbf{B}^{k_j+1}\rightarrow \mathbf{B}^*$. On the other hand, from the optimality of $\mathbf{B}^{k_j+1}$ to (\ref{sub2}), $\mathbf{V}^{k_j+1}$ to (\ref{sub1expAPGUpdate3}) and Theorem 1 in~\cite{li2015accelerated}, we have
	\begin{equation}
		\begin{split}
			0 \in \partial_\mathbf{B} F(\mathbf{A}^{k_j},\mathbf{B}^{k_j+1}),~~~ 0 \in \partial_\mathbf{A} F(\mathbf{V}^{k_j+1},\mathbf{B}^{k_j+1}).\\
		\end{split}\nonumber
	\end{equation}
	Let $k \rightarrow +\infty$ above. We have
	\begin{equation}
		\begin{split}
			0 \in \partial_\mathbf{B} F(\mathbf{A}^{*},\mathbf{B}^{*}),~~~ 0 \in \partial_\mathbf{A} F(\mathbf{A}^{*},\mathbf{B}^{*}).\\
		\end{split}\nonumber
	\end{equation}
	Hence, $(\mathbf{A}^*,\mathbf{B}^*)$ is a stationary point of (\ref{relax}).
\end{proof}

\section{Experimental Verification}
\subsection{Experimental Settings}
In this section, we conduct experiments on both synthetic and real datasets to show the advantages of our proposed method.

\textbf{Data}: The real datasets cover two computer vision tasks: 1) non-linear data recovery from the similarity; 2) non-linear data denoising over the MNIST~\cite{salakhutdinov2008quantitative} and COIL-20~\cite{nene1996columbia} databases. The MNIST database consists of $8$-bit grayscale images of digits from "$0$" to "$9$". Each image is centered on a $28 \times 28$ grid. The COIL-20 database contains 1440 samples distributed over 20 objects, where each image is with the size of $32 \times 32$.

\textbf{Baselines}: We assess the performance of the proposed model in comparison with several state-of-the-art methods including Outlier Pursuit (OP)~\cite{xu2010robust}, KPCA~\cite{nguyen2009robust} and GRPCA~\cite{shahid2015robust}, the codes of which are downloaded from the authors’ websites except KPCA. We implement KPCA according to the paper. All methods' settings follow the suggestions by the authors or the given parameters.

\textbf{Evaluation metrics}: Two metrics are used to evaluate the performance of data recovery methods.\\
\emph{-- Peak Signal-to-Noise Ratio (PSNR) }: Suppose Mean Squared Error (MSE) is defined as $\|\mathbf{X}_0-\mathbf{X}_{rev}\|_F^2/dn$, where $\mathbf{X}_0, \mathbf{X}_{rev} \in \mathbb{R}^{n\times d}$ are the original image and the recovered image, respectively, then the PSNR value can be calculated by $10\log_{10}(255^2/\text{MSE})$.\\
\emph{-- Signal-to-Noise Ratio (SNR) }: The SNR can be calculated by $10\log_{10}(\|\mathbf{X}_{0}\|_F^2/\|\mathbf{X}_0-\mathbf{X}_{rev}\|_F^2)$.

\subsection{Data Recovery with Graph Constraint}
\begin{figure}[ht]
	\centering
	\includegraphics[width=80mm]{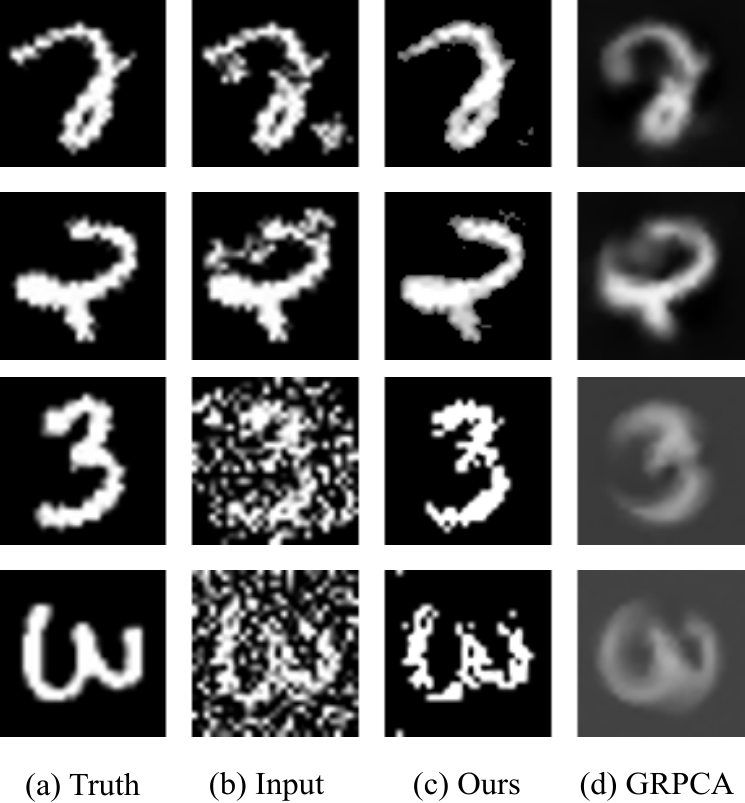}
	\caption{(a) is the original data. (b) is the corrupted observation with Gaussian or structure noise. (c) is the result provided by our non-linear recovery method. (d) is the recovery results of GRPCA. }\label{recovSimi}
\end{figure}
In this experiment, we aim at recovering the data with a graph constraint. For our proposed model, we solve the subproblem (\ref{sub1}) with $\mathbf{B}^T\mathbf{B}$ fixed. Note that, except GRPCA, all other comparative methods cannot cope with this similarity recovery task. We examine the effectiveness of our model over the MNIST database. Firstly, we randomly select $400$ images from digit "$2$" and "$3$", and then rotate them with a random degree $\theta \in [-90^\circ, 90^\circ]$. Secondly, $10\%$ of the images are randomly chosen to corrupt: for each chosen image $\mathbf{x}$, its observation is computed by adding Gaussian noise with zero mean and standard deviation $0.5\|\mathbf{x}\|_F^2$, or adding three blocks of structure occlusion with the size of $4\times 4$. Finally, we convert these images to vectors of $784$ dimension. In order to construct the graph constraint for our proposed model and GRPCA, we adopt the same way as in~\cite{shahid2015robust} and the input graph is calculated from the $5$-nearest neighbors. Note that we utilize the Gaussian kernel function with $\gamma = 0.007$ on digit "$2$" and Polynomial Kernel Function with $d = 2$ on digit "$3$" .

Fig.~\ref{recovSimi} shows the results of our method and GRPCA on the rotated MNIST data. As we can see, the proposed method produces the encouraging recovery results and outperforms the competing method. This confirms the superiority of our model in the setting of highly non-linear scenario. It is worthwhile to mention that the method used to solve the problem in (\ref{sub1}) can be directly applied into some other scenarios, such as multi-modal inference and multi-view learning for recovery from similarity.
\subsection{Data Denoising}

\begin{figure}[!t]
	\centering
	\includegraphics[width=100mm]{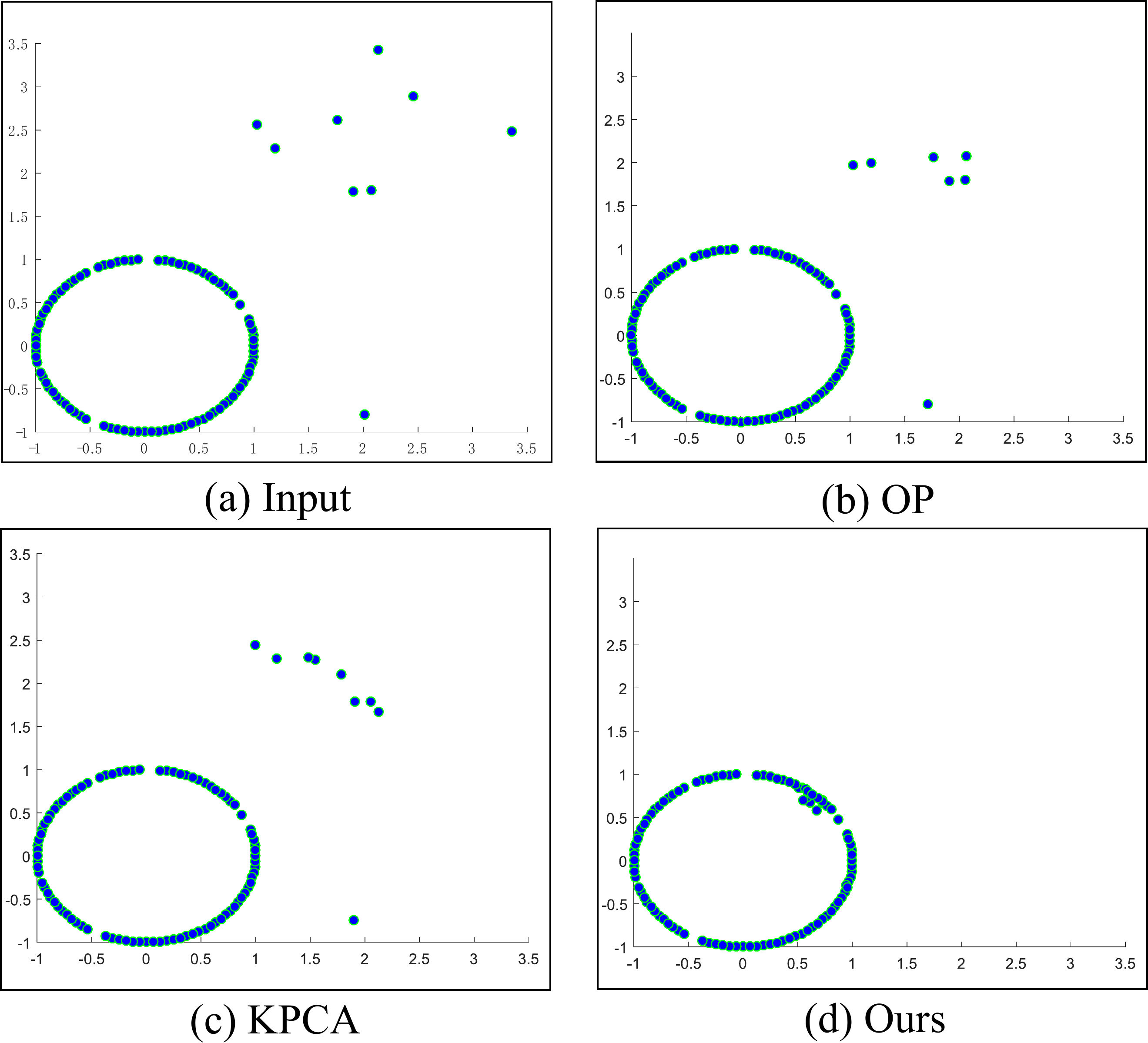}
	\caption{(a) The input data. (b) The recovery results of OP. (c) The recovery results of KPCA. (d) is the result of our kernel low-rank recovery method.}\label{toydata}
\end{figure}

\begin{figure}[!t]
	\centering
	\includegraphics[width=100mm]{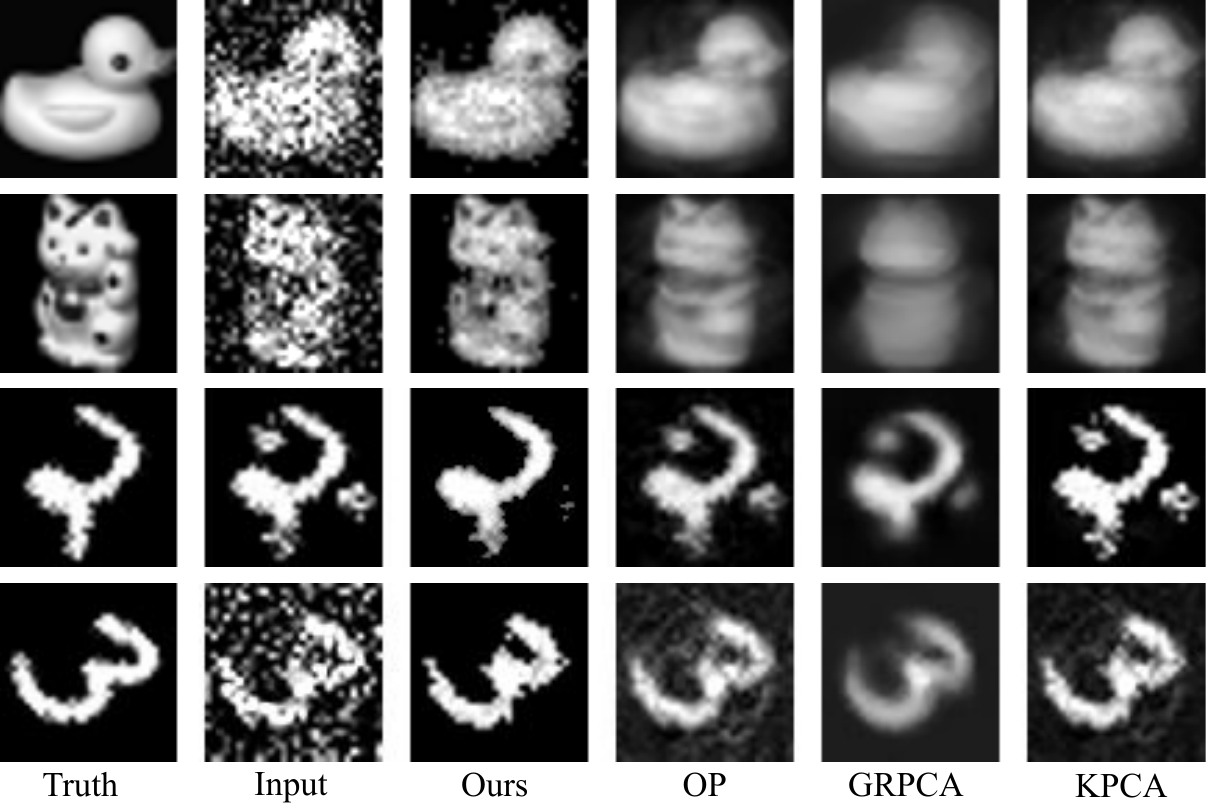}
	\caption{ Visual comparison on the task of Data Denoising.}\label{denoised}
\end{figure}

We now evaluate the effectiveness of our method on the data denoising problem.

1)~\emph{Two-Dimensional Case}: Fig.~\ref{toydata} shows the results of methods on some synthetic data. We randomly select 100 data points from a circle embedded in a two-dimensional plane, resulting in a $2\times100$ clean data matrix. We then select 10\% data points as the outliers. In this example, the ambient data dimension is equal to the extrinsic dimension, and thus traditional low-rankness based methods (e.g., OP) cannot recover the data points correctly. In sharp contrast, as shown in Fig.~\ref{toydata}, our method can still identify the outliers and replace them by the points which are close to the ground-truth manifold.

2)~\emph{High-Dimensional Case}: We apply the proposed method to denoise data over the MNIST and the COIL-20 databases. We compare all the recovery methods in two cases: (1) rotation with Gaussian noise, (2) rotation with occlusion. For the MNIST database, we randomly select $400$ images from digit "$2$" and "$3$", and then rotate them with a random degree. For the COIL-20 database, we randomly choose $10$ subjects and their corresponding $72$ images, and then rotate each image $5$ times with a degree from $-90^\circ$ to $90^\circ$. In these two cases, $10\%$ data is randomly chosen to corrupt in the same way as the previous experiment. Finally, we convert these images to vectors and normalize them to a unit length.
\begin{table}[]
	\centering
	\caption{Comparison of PSNR and SNR results on on the MNIST and the COIL-20 databases.}
	\label{denoise-Eval}
	\begin{tabular}{|c|c|c|c|c|c|c|c|c|}
		\hline
		\multirow{3}{*}{Methods} & \multicolumn{4}{c|}{MNIST}                                              & \multicolumn{4}{c|}{COIL-20}                                            \\ \cline{2-9}
		& \multicolumn{2}{c|}{Gaussian(dB)} & \multicolumn{2}{c|}{Occlusion(dB)} & \multicolumn{2}{c|}{Gaussian(dB)} & \multicolumn{2}{c|}{Occlusion(dB)} \\ \cline{2-9}
		& PSNR            & SNR             & PSNR             & SNR              & PSNR            & SNR             & PSNR             & SNR              \\ \hline
		OP                       & 21.32           & 13.24           & 19.29            & 10.17            & 27.87           & 19.13           & 26.25            & 19.14            \\ \hline
		KPCA                     & 24.30           & 16.27           & 21.28            & 14.23            & 29.19           & 22.16           & 27.22            & 21.15            \\ \hline
		GRPCA                    & 22.34           & 14.22           & 18.31            & 10.20            & 26.20           & 20.21           & 24.55            & 18.43            \\ \hline
		Ours(Gaussian)           & \textbf{39.03}  & \textbf{29.25}  & \textbf{37.21}   & \textbf{27.26}   & \textbf{39.97}  & \textbf{33.15}  & \textbf{38.22}   & \textbf{31.12}   \\ \hline
		Ours(Polynomial)         & 36.53           & 26.77           & 34.08            & 23.76            & 32.96           & 25.03           & 33.03            & 23.55            \\ \hline
	\end{tabular}
\end{table}


Table~\ref{denoise-Eval} compares our model against all $3$ competing methods. We report the results of our proposed method with the Gaussian Kernel Function ($\gamma = 0.003$) and the Polynomial Kernel Function ($d = 1.5$) in the last two lines. Since the rotated data owns a structure of highly nonlinearity, our method, which combines the data independent kernel trick and kernel low-rank pursuit, consistently outperforms other methods and obtains the highest PSNR and SNR. Fig.~\ref{denoised} visualizes the denoised results of all the methods. It can be seen that our model is more robust to the gross corruption and achieves better recovery results in details due to the unrestraint of rank in the original space. We notice that the proposed method with the Gaussian Kernel obtains a better recovery results than using the Polynomial Kernel. This is because, for big order $d$, the first term in problem (\ref{sub1expHQ}) dominates the optimization procedure, however, with small $d$, the proposed method cannot capture the underlying non-linear structure of data. Contrary to the Polynomial Kernel, the Gaussian Kernel is a infinite-dimensional mapping with bounded value. Thus when the data owns a highly nonlinear structure,  the Gaussian Kernel can perform better than Polynomial Kernel. Comparing with other methods, the results of ours, again, confirm the superiority of combing the implicitly low-rank pursuit and data independent kernel trick.

\begin{table*}[tp]
	\centering
	\setlength{\tabcolsep}{3pt}
	\renewcommand\arraystretch{1.2}
	\caption{Processing time comparison on the MNIST dataset with Gaussian noise.}
	\label{tab:time}
	\begin{tabular}{|c|c|c|c|c|}
		\hline
		Methods     & OP & KPCA   & GRPCA & Ours~(Gaussian kernel)
		\\ \hline
		MNIST~(Gaussian noise)   & 43.9s       &  120.4s     & 131.2s  & 106.1s
		\\ \hline
	\end{tabular}
\end{table*}

The CPU time for all the competing methods on the MNIST dataset with Gaussian noise is presented in tabel~\ref{tab:time}. All methods based on graph or kernel have high computational complexity.
By utilizing the APG strategy for optimization,
our method is faster than the other two graph based methods.

\section{Conclusion and Future Work}
This paper shows a method, more robust than other kernel methods, for solving the non-linear matrix recovery problem. To solve the nonconvex optimization problem, we propose an algorithm that leverages the techniques of linearization and proximal gradient. In the meantime, we also analyze the convergence and complexity of our algorithm, and theoretically prove that the obtained solution is a stationary point.
Compared with the state-of-the-art methods,
our proposed method achieves much better results in both data recovery and denoising tasks.
\par
For future work, we hope to reduce the computational complexity of the proposed algorithm.
It is worth mentioning that the computational complexity of updating $\mathbf{A}$ can be reduced to $O(n^2)$, since the high cost of our method comes from a sequential updating step of the columns of matrix $\mathbf{A}$.
In practice, when optimizing $\mathbf{a}_i$, other columns are fixed and we update the full $\mathbf{A}$ when all columns are calculated. Hence a parallel strategy can be introduced, namely, we can parallelly update the columns of $\mathbf{A}$  in one iteration.
In this case,
the computational complexity of updating $\mathbf{A}$ can be reduced to $O(n^2)$.

\section*{Acknowledgment}
The authors would like to thank the anonymous reviewers for their helpful comments. The work of Guangcan Liu is supported in part by National Natural Science Foundation of China (NSFC) under Grant 61622305 and Grant 61502238, in part by the Natural Science Foundation of Jiangsu Province of China (NSFJPC) under Grant BK20160040. The work of Jun Wang is supported in part by NSFC under Grant 61402224, 6177226), the Fundamental Research Funds for the Central
Universities (NE2014402, NE2016004).

\section*{References}
\bibliographystyle{unsrt}
\bibliography{egbib}


\end{document}